\definecolor{mygreen}{RGB}{28,172,0} 
\definecolor{mylilas}{RGB}{170,55,241}
\def\BState{\State\hskip-\ALG@thistlm}
\newcommand{\tp}{\mathsf{T}}
\newcommand{\N}{\mathbb{N}}
\newcommand{\R}{\mathbb{R}}
\newcommand{\Z}{\mathbb{Z}}
\newtheorem{theorem}{Theorem}
\newtheorem{lemma}{Lemma}
\newtheorem{definition}{Definition}
\def\thanks#1{\protected@xdef\@thanks{\@thanks\protect\footnotetext{#1}}}
\title{ON CONVERGENCE RATE OF ADAPTIVE MULTISCALE VALUE FUNCTION APPROXIMATION FOR REINFORCEMENT LEARNING\thanks{Another version of this paper has been submitted to 2019 IEEE International Workshop on 
MACHINE LEARNING FOR SIGNAL PROCESSING}}
\date{}
\author{
  Tao Li \\
  Department of Electrical and Computer Engineering\\
  New York University\\
  New York, NY 11201 \\
  \texttt{tl2636@nyu.edu} \\
   \And
 Quanyan Zhu \\
  Department of Electrical Engineering\\
  Ne York University\\
  New York, NY 11201 \\
  \texttt{qz494@nyu.edu} \\
}
\begin{document}
\maketitle

\begin{abstract}
	In this paper, we propose a generic framework for devising an adaptive approximation scheme for value function approximation in reinforcement learning, which introduces multiscale approximation. The two basic ingredients are multiresolution analysis as well as tree approximation. Starting from simple refinable functions, multiresolution analysis enables us to construct a wavelet system from which the basis functions are selected adaptively, resulting in a tree structure. Furthermore, we present the convergence rate of our multiscale approximation which does not depend on the regularity of basis functions. 	
\end{abstract}

\keywords{Multiscale approximation, multiresolution analysis, tree approximation, wavelets, $n-$term approximation, reinforcement learning}
\section{Introduction}
	In the last few decades, reinforcement learning has attracted rapidly increasing interest in machine learning communities and has made encouraging successes in solving the problem of predicting the expected long-term future cost of a stochastic dynamical system in the framework of Markov Decision Process\cite{Sutton_2018wc}. 
	
	In a reinforcement learning problem, the task of the agent, based on rewards received at each time step, is to find an optimal policy maximizing the expected long term rewards. In order to derive the optimal policy, there are mainly two approaches: value-based and policy-based methods. For value-based methods, the policy is obtained by investigating the optimal value function depicted by the Bellman equation. On the other hand, policy-based methods bypass the difficulty of analyzing the value function by focusing on optimal policy itself, namely, directly learning the optimal policy from the interactions with the environment. In this work, we focus on value-based ones, or more specifically, approximating the optimal value function by a linear combination of adaptive basis which yields a multiscale approximation.
	
	For the value function approximation, generic convergence results have been well established by \cite{tsitsiklis1997analysis},\cite{Sutton_1988bt} where the authors have illustrated that the value function can be approximated by a linear combination of basis functions. Later on, some researchers endeavored to find proper basis functions including tile coding \cite{Sutton_2018wc}, Radial Basis Functions (RBF), polynomial basis \cite{lagoudakis2003least} as well as Fourier basis \cite{konidaris2011value}. However, when using fixed basis, high performance in practice requires smart representation choices (e.g. the resolution of state space discretization in tile coding), which involves manual design and intuition. In order to automate the process of constructing suitable representations, adaptive approximation methods are of great interest. For example, adaptive tile coding (ATC) proposed in \cite{whiteson2007adaptive} starts with large tiles and gradually refines the tiles during learning by splitting existing tiles in two, yielding a state discretization with different resolutions for seperate regions. However, results on convergence analysis and approximation residual have not been known yet. 
	
    In this paper,  we move a step forward on adaptive approximation by proposing a generic framework for devising a multiscale value function approximation. The proposed framework is applicable for all basis functions that are refinable including piece-wise constant functions, piece-wise polynomials as well as other scaling functions in wavelet analysis, hence we refer to this method as generalized multiscale approximation (GMSA).  Our GMSA is a combination of multiresolution analysis \cite{meyer1995wavelets} and tree approximation \cite{COHEN2001192}. As long as the basis functions are refinable so that multiresolution analysis can be established, tree approximation performs adaptive basis selection of these refinable functions, resulting in a tree-based wavelet approximation with a multiresolution discretization of the state space and we show that ATC is just a special case of GMSA. In addition, we provided rigorous discussion about convergence rate and error analysis of our GMSA, indicating that GMSA does not rely on the regularity of basis functions.     
    	
    The rest of the paper is organized as follows. Section 2 provides some preliminaries related to MDP and multiresolution analysis. A detailed description about how tree approximation is incorporated in our GMSA is presented in Section 3, where the tree based wavelet approximation algorithm is introduced. Numerical results in Section 4 demonstrate that our method is more effective than fixed tile coding and ATC. In Section 5, we discuss the related works on adaptive representations which turn out to be a special case of our GMSA. Finally, conclusion is given in Section 6.

\section{Preliminaries}
In this section, we briefly introduce the formulation of reinforcement learning and related methodologies in dealing with this kind of sequential decision making problem. Besides, a quick review of multiresolution analysis and wavelet is also provided.
\subsection{Markov Decision Process}
A five tuple $<S,A,T,R,\gamma>$ is a Markov Decision Process (MDP) if it contains a set of states $S$ (here we consider a continuous bounded state space), a set of actions $A$, a transition dynamics function $T: S\times A\times S\to [0,1]$ and a reward function $R: S\times A\times S\to\R$. It is noted that as suggest in \cite{tsitsiklis1997analysis}, here we assume the reward function is square-integrable, i.e., $\mathbb{E}[R^2(s,a,s')]<\infty$ and hence the value function is also square-integrable. $T(s,a,s')$ denotes the probability of transitioning from state $s$ to $s'$ when taking action $a$ and $R(s,a,s')$ denotes the corresponding reward associated with this transition, while $\gamma$ is a discount factor representing how much the future reward is discounted, compared with current ones. The Markov property of MDPs lies in the fact that the dynamics in this problem is completely determined by mapping $T$, in which case the upcoming transition only depends on the current state and the action to be taken and no prior information about previous states and actions is needed. 

The ultimate goal for the agent is to find an optimal policy $\pi: S\to A, \pi=\{a_1,a_2,\cdots\}, \pi(s)=a_s\in A$, so that the discounted sum of future rewards is maximized. Notice that the rewards depend on the current state and the policy being followed, to seek the optimality, the agent need only compute the optimal value function $V^*$ which is defined as 
	$	V^*(s)=\max_{\pi}V^{\pi}(s),$
	where $V^{\pi}(s)$ is the total expected rewards starting from an initial state $s\in S$ and following a policy $\pi=\{a_1,a_2,\cdots,\}$, i.e.,
	\begin{align*}
		V^{\pi}(s)=\mathbb{E}\left\{\sum_{t=0}^{\infty} \gamma^{t} R\left(s_{t}, \pi\left(s_{t}\right), s_{t+1}\right) | s_{0}=s\right\}.
	\end{align*}
	From the description above, it is straightforward to verify that the value function satisfying the following Bellman equation:
\begin{align*}
		V^*(s)=\max_a\mathbb{E}\left\{R(s,a,s')+\gamma V^*(s')\right\}.
\end{align*}
	 When $V^*$ is known, the optimal policy $\pi^*$ is given by 
	\begin{align}\label{valuebase}
		\pi^*(s)=\arg\max_{a\in A}\mathbb{E}\left\{R(s,a,s')+\gamma V^*(s')\right\}.
	\end{align}
\subsection{Value Function Approximation}
For the value-based methods, the priority is to compute or estimate the value function and then the optimal policy can be obtained using \eqref{valuebase}. However, it is often computationally challenging to solve the equations above, especially when there are a large number of states involved in computation and as state spaces grow, the computation complexity grows exponentially. To break the curse of dimensionality, parameterized function $\tilde{V}(s,\theta)$ is constructed for approximating the optimal value function $V^*(s)$, where $\theta$ is the tuning parameter to make the approximation a good fit to the optimal value function. 

Broadly speaking, there are mainly two approximation architectures: linear and nonlinear approximations, and the difference between the two lies in the dependence of $\tilde{V}(s,\theta)$ on $\theta$. In this paper, we mainly focus on the linear approximation where the dependence is linear; namely, $\tilde{V}(s,\theta)$ can be written as a linear combination:
	$\tilde{V}(s,\theta)=\sum_{i=1}^n \theta_i b_i(s):= \theta^\tp b(s),$ where $b(s)=[b_1(s),b_2(s),\cdots,b_n(s)]^\tp$ forms a basis and $\theta=(\theta_1,\theta_2,\cdots,\theta_n)^\tp$ are the corresponding coefficients which can be determined using reinforcement learning algorithms, such as temporal difference learning TD($\lambda$) \cite{Sutton_1988bt} and a concise description is given below.

Suppose that at time $t$, the agent is at the current state $s_t$ and the parameter vector has been set to $\theta_t$, which gives an approximation value $\tilde{V}(s_t,\theta_t)$. Then, according to the transition kernel $T$, the agent is guided to the next state $s_{t+1}$ and accordingly, the temporal difference $d_t$ corresponding to the transition from $s_t$ to $s_{t+1}$ is defined as $$d_t=R(s_t,a,s_{t+1})+\gamma \tilde{V}(s_{t+1},\theta_t)-\tilde{V}(s_t,\theta_t). $$ Then, the parameter vector $\theta_t$ can be updated by the following temporal difference learning method:
$$\theta_{t+1}=\theta_t+\alpha_td_t\sum_{k=0}^t(\gamma \lambda)^{t-k}\nabla \tilde{V}(i_k,\theta_t),$$ where $\alpha_t$ is the step size and $\lambda$ is the parameter that specifies the algorithm utilized here, since temporal difference learning is actually a continuum of algorithms. One advantage of employing linear approximation, as suggested in \cite{tsitsiklis1997analysis}, is that computing the gradient is straightforward: $\nabla \tilde{V}(s,\theta)=b(s)$, which greatly simplifies the convergence proof and the training process. Throughout this paper, for the sake of simplicity, we limit our analysis to temporal difference learning; however, our adaptive basis construction is also applicable for other reinforcement learning algorithms such as SARSA and Q-learning.  

\subsection{Multiresolution Analysis}
Many applications in signal processing such as image denoising, compression and reconstruction rely on wavelet analysis or multiresolution analysis, where the given signals are approximated by its wavelet expansion, forming $L_2-$approximation with arbitrary precision. Since value function approximation in reinforcement learning is also an approximation problem, it is natural to consider leveraging this technique. 

In multiresolution analysis, a key ingredient is refinable functions, which possess the property of self-similarity. In other words, these special function can be represented by its smaller copies, or more formally defined as follows.
 \begin{definition}[Refinable function]
 	A function $\varphi\in L_2(\R)$ is a refinale function if there exists an $\ell^2$ sequence $\{h(k)\}$ such that $$\varphi(x)=2^{1/2}\sum_{k\in \Z}h(k)\varphi(2x-k).$$
 \end{definition}
On the other hand, by shifting, these $\varphi(2x-k)$ form an orthonormal system (normalization) in $L_2(\R)$ and naturally, we can construct an approximation using these shifted ones. Thus we define the following approximation operator.
 \begin{definition}
	Let $V_j=\operatorname{span}\{\varphi_{j,k},k\in \Z\}$, and the approximation operator $P_j: L_2(\R) \rightarrow V_j$ is defined as 
	\begin{align*}
		P_{j} f(x):=\sum_{k}\left\langle f, \varphi_{j, k}\right\rangle \varphi_{j, k}(x),
	\end{align*}
	where $\varphi_{j,k}(x):=2^{j/2}\varphi(2^jx-k)$, and $\langle\cdot,\cdot\rangle$ denotes the inner product in $L_2(\R)$.
\end{definition}

It is noted that if $\varphi$ is the Haar scaling function, then $P_j f$ exactly gives a piece-wise constant approximation, the simplest case in function approximation. On the other hand, from our analysis, it is straightforward that each $\varphi_{j,k}$ can be represented by the linear combination of $\varphi_{j+1,k}$(self-similarity), in other words, $V_0\subset V_1\subset\cdots\subset V_j \subset \cdots V_\infty=L_2(\R)$, thus we are able to construct a $j-$scale approximation using the basis function in $V_j$ and we can achieve an improvement in the result by moving into finer scales, i.e., considering larger $j$. However, the efficiency of constructing a finer approximation  becomes a concern.
For example, when using Haar scaling function for approximation, say we have computed the coefficient $\langle f,\varphi\rangle$, and we aim for giving a better approximation result by employing the basis in $V_1$. Unfortunately, new coefficients $\langle f, \varphi(2x)\rangle, \langle f, \varphi(2x-1)\rangle$ cannot be derived from the previous one $\langle f,\varphi\rangle$ because $V_1$ contains $V_0$ and is ``superior '' to $V_0$, hence we cannot infer $V_1$ from its subspace $V_0$. To address this drawback, we investigate the orthogonal complement of $V_j$ in $V_{j+1}.$ In order to better present our ideas in multiresolution analysis, we introduce the detail operator $Q_j$ as follows.
\begin{definition}
		Let $W_j$ be the orthogonal complement of $V_j$ in $V_{j+1}:$ $V_{j+1}=V_j\oplus W_j$. Then the detail operator $Q_j: L_2(\R)\rightarrow W_j$ is defined as $$Q_j:=P_{j+1}-P_j,$$ which projects $V_{j+1}$ onto its subspace $W_j$. 
	\end{definition}
It is noted that $Q_j$ projects the basis of $V_{j+1}$ onto $W_j$, yielding the basis of $W_j$ and actually, the introduction of $Q_j$ increases the efficiency in deriving the approximation. Considering the example above, now we can make full use of the existing information about $f$, which means $\langle f,\varphi\rangle$ is preserved and we further add some information from the orthogonal complement $W_0$ by applying $Q_0$, creating an approximation in $V_1$, since it is the direct sum of $V_0$ and $W_0$. Moreover, once we obtain the approximation in $V_1$, we can again add information from $W_1$, which gives a new approximation in $V_2$ and this process can be repeated until the error metric is below the tolerance. More mathematically, let $\psi=Q_0\varphi$ and $\psi_{j,k}(x)=2^{j/2}\psi(2^jx-k)$, then it can be easily seen that $W_j=\operatorname{span}\{\psi_{j,k},k\in \Z\}$. Hence $\{ \psi_{j,k},j\in \N,k\in \Z\}$ forms an orthonormal wavelet basis (normalization) in $L_2(\R)$ and in wavelet analysis $\psi$ is referred to as the mother wavelet. 

Finally, we conclude this part by illustrating wavelet decomposition in $L_2(\R^d)$ using the one-dimensional wavelet basis introduced above. For simplicity, in our GMSA, we only consider Haar wavelet system, whereas the framework is applicable for any compactly support wavelet functions,for example, those more complicated orthogonal wavelets in \cite{tao2019directional}. For the scaling function $\varphi$ denoted by $\psi^0$ and the mother wavelet $\psi$ denoted by $\psi^1$, we can construct $d-$dimensional wavelet basis using tensor product: let $E$ denote the collection of vertices of the unit cube in $\R^d$. For each vertex $e=(e_1,e_2,\cdots, e_d)\in E$, we define the multivariate function $\psi^e(x_1,x_2,\cdots, x_d):=\psi^{e_1}(x_1)\psi^{e_2}(x_2)\cdots\psi^{e_d}(x_d),$
and for nonzero vertex $e'$, $\psi^{e'}$ serves as one of the mother wavelets in $L_2(\R^d)$. Hence, for $j\in\N, k\in Z^d$, the wavelet function is defined as $\psi^{e'}_{j,k}(x):= 2^{jd/2}\psi^{e'}(2^jx-k),$ whose support is $I_{j,k}:=2^{-j}(k+[0,1]^d)$ and we can also denote $\psi_{j,k}^{e'}$ by $\psi_{I_{j,k}}^{e'}$. Thus, for each dyadic cube $I$, there is a collection of wavelet functions $\{\psi^{e'}_I\},e'\in E'$, where $E'$ is the set of all nonzero vertices. 

Since the domain $S$ is a bounded set, without loss of generality, we assume it is a unit cube $[0,1]^d$. Let $\mathcal{D}_j:=\cup_{k}I_{j,k}$ denote the set of all dyadic cubes with sidelength $2^{-j}$ and $\mathcal{D}=\cup_{j}\mathcal{D}_j$ denote the collection of all cubes, then the wavelet decomposition follows as
\begin{align*}
	f&= \langle f, \varphi\rangle \varphi+\sum_{j=0}^\infty\sum_{k\in Z^d}\sum_{e'\in E'}\langle f, \psi^{e'}_{j,k}\rangle\psi^{e'}_{j,k}\\
	&= \sum_{I\in \mathcal{D}_0}\sum_{e\in E}c^e_{I}(f)\psi^e_I+\sum_{j=1}^{\infty}\sum_{I\in \mathcal{D}_j}\sum_{e\in E'}c^{e'}_I(f)\psi_I^{e'}.
\end{align*}
For simplicity of notation, we introduce the following:
$$A_I(f)=\left\{
\begin{aligned}
	&\sum_{e\in E} c^e_{I}(f)\psi^e_I, I\in \mathcal{D}_0\\
	&\sum_{e\in E'} c^{e'}_I(f)\psi_I^{e'}, I\in D_j, j\geq 1.
\end{aligned}\right.
$$ Therefore, the decomposition can be rewritten as $f=\sum_{I\in D}A_I(f).$

\section{Generalized Multiscale Approximation}
As we mentioned before, even though wavelet basis makes it possible for us to devise a ``finer'' approximation by leveraging the information from the ``coarse'' one, the new approximation are based on the full basis of the new space (those from $V_j$ plus those from $W_j$), which may not be necessary all the time and could incur a huge consumption of computing resources, since the number of bases grows exponentially. In order to avoid unnecessary computation, we have to consider localization of the value function when using wavelet approximation, namely, a few coefficients with large magnitudes plus those with small magnitudes. Hence, just like the common practice in image compression, there is no need to care about basis functions with tiny coefficients and thus it is not a wise choice to add every basis function from $W_j$, instead only those with large coefficients should be paid attention to.

In fact, the same philosophy has also been presented in another approximation scheme: best $m-$term approximation in \cite{Temlyakov:1998ul}, where the author claims that taking the expansion $f=\sum_{I}c_I \psi_I$ and forming a sum of $m$ terms with largest $|c_I|$ out of this expansion gives the best approximation. However, this is not applicable in our value function approximation problem, since the coefficients are unknown at the beginning of training and it is impossible for us to rearrange them for constructing best $m-$term approximation. The difficulty in implementing best $m-$term approximation is that there is no connection between two basis functions utilized in this approximation scheme and the only way to collect these basis functions is exhaustive searching. Considering the deficiency, we now turn to tree approximation, an adaptive approximation scheme proposed in \cite{COHEN2001192} for designing a universal and progressive encoder for image compression, where all the selected basis functions are organized in a tree structure. 
\subsection{Tree Approximation}
Recall that we consider the state space $S$ as a unit cube $[0,1]^d$ and we denote by $\mathcal{D}$ all the dyadic cubes of $S$ and by $D_j$ all the cubes with side-length $2^{-j}$. For cube $I\in \mathcal{D}_j$ and cube $J\in \mathcal{D}_{j+1}$, if $J\subset I$, then we say $I$ is a parent cube of $J$ denoted by $I=\mathcal{P}(J)$ while $J$ is a child cube of $I$, denoted by $J=\mathcal{C}(I)$. For example, when $d=1$, $[0,\frac{1}{4}]$ is a child cube of $[0,\frac{1}{2}]$, whereas $[\frac{3}{4},1]$ is not. With parent-child relationship, we can construct a tree where the elements in $\mathcal{D}$ are nodes and this tree is denoted by $\mathcal{T}=\mathcal{T}(\mathcal{D})$. Before we introduce the adaptive basis selection, we first give the following definition of proper subtree.
\begin{definition}
	$\tilde{\mathcal{T}}$, a collection of nodes of $\mathcal{T}$, is a proper subtree if:  
	\begin{itemize}
	    \item the root node is $X$;
	    \item some cube $I\neq X$ is in $\tilde{\mathcal{T}}$ then its parent $\mathcal{P}(I)$ must also be in $\tilde{\mathcal{T}}$.
	\end{itemize}
\end{definition} 
With a subtree defined above, we can construct a partition $\Lambda=\Lambda(\tilde{\mathcal{T}})$ consisting of those outer leaves of a given subtree and accordingly perform the adaptive basis selection, since each dyadic cube corresponds to some wavelet functions. Therefore, basis selection is equivalent to tree construction, which is described in detail below.

Intrinsically, our tree-based wavelet approximation is constructed by thresholding its wavelet coefficients. For a given tolerance $\eta$, to generate a partition adaptively, one needs a refinement strategy: starting with the root: for the current cube $I$, one determines whether subdivide it by examining whether the corresponding coefficient $\|A_I(f)\|$ is greater than $\eta$. If $I$ is subdivided, then the same procedure will be applied recursively to its children. In the end, we obtain a proper tree $\mathcal{T}(f,\eta)$ consisting of dyadic cubes whose corresponding coefficients are greater than $\eta$ as well as their children. For each $\eta>0$,  if we define $\Lambda(f,\eta):=\{I\in \mathcal{D}_+(S):\|A_I(f)\|\geq \eta\}, \mathcal{D}_+(S)=\cup_{j\geq 1}\mathcal{D}_j$ which is a collection of all sub-cubes whose corresponding wavelet coefficients are greater than the threshold, then $\mathcal{T}(f,\eta)$ is the smallest tree containing $\Lambda(f,\eta)$. Furthermore, we associate an approximation with the proper tree $S(f,\eta)=\sum_{I\in \mathcal{T}(f,\eta)}A_I(f),$ which is referred to as tree-based wavelet approximation and is the core of our GMSA. 
\subsection{Generalized Multiscale Approximation Algorithm}
In this subsection, we introduce our GMSA in solving value function approximation problem. The basic idea is to start with an orthonormal wavelet system, of which the scaling function and the mother wavelet are defined on the state space $S$ and we keep refining the wavelet functions by scaling and shifting following a tree-based manner introduced above. To be more specific, we first initialize the basis $\phi_0$ with scaling function and mother wavelets and also parameter $\theta$ as well as eligibility trace vector. Then, parameter is updated by deploying TD($\lambda$) algorithm under current basis. Once the lowest Bellman error is achieved, we construct finer wavelet functions by applying sifting and scaling operations to those basis functions whose coefficients are greater than tolerance. Again, we roll out TD($\lambda$) with these new basis functions. This procedure is repeated until all coefficients are below the tolerance and the pseudocode is provided as follow.

\begin{algorithm}\label{algo}
	\caption{TD($\lambda$):$ S, A, T, R, \alpha_t, \gamma, p ,\phi_0, \eta$}
	\begin{algorithmic}[1]
		\State Initialization: $t\leftarrow 0, u\leftarrow 0, z_{-1}\leftarrow0, j\leftarrow 0$
		\Repeat
		\For{$i\rightarrow$ 1 to  $\operatorname{dim}(\phi_j)$}
		\State Initialize the weight vector $\theta_0$ to zero 
		\EndFor
			\Repeat 
		\State $s_{t+1} \leftarrow s_t$
		\State $\Delta V\leftarrow R(s_{t},a,s_{t+1})+\gamma V(s_{t+1},r_t)-V(s_t,r_t)$
		\State $z_t\leftarrow \gamma \lambda z_{t-1}+\phi(s_{t})$
		\State $\theta_{t+1}\leftarrow \theta_t+\alpha_t z_t\Delta V$
		\If {$|\Delta V|<$ lowest Bellman error }
		\State $u\leftarrow 0$
		\Else 
		\State $u\leftarrow u+1$
		\EndIf
		\State $t\leftarrow t+1$
			\Until{$u>p$}
		\State $\theta_j\leftarrow \theta_{t+1}$
		 \State $V\leftarrow V-\theta_j^\tp \phi_j$
		\State $j\leftarrow j+1$
		 \State perform the refinement according to value criterion: obtain new basis $\phi_j$
		\State $t\leftarrow 0$
		\Until{the norm of wavelet coefficients is below $\eta$} 
	\end{algorithmic}
\end{algorithm}
\subsection{Convergence of Generalized Multiscale Approximation}
Recall that for each $\eta>0$, we define $\Lambda(f,\eta):= \{I\in \mathcal{D}_+(S):\|A_I\|\geq \eta\}$ and throught the paper, unless specified, $\|\cdot\|$ always denotes the $L_2$ norm. Correspondingly, let $\mathcal{T}(f,\eta)$ denote the smallest tree containing $\Lambda(f,\eta)$ and we obtain the following approximation $S(f,\eta)=\sum_{I\in \mathcal{T}(f,\eta)}A_I(f).$ In order to present the convergence result and error analysis, we first characterize the smoothness or the regularity of the function using the number of cubes and this characterization is inspired DeVore's work on nonlinear approximation \cite{DeVore:1998uf}. 
\begin{definition}
	For all $\eta>0$, we define $\mathcal{B}_\lambda(L_2(S))$ as the set of functions $f\in L_2(S)$ for which there exists a constant $C(f)$ such that $\# \mathcal{T}(f,\eta)\leq C(f)\eta^{-\lambda}$, i.e., $$\mathcal{B}_\lambda(L_2(S))=\{f\in L_2(S)|\# \mathcal{T}(f,\eta)\leq C(f)\eta^{-\lambda}\},$$ 
	and the quasi-norm is defined as $|f|_{B_\lambda(L_2(S))}^\lambda=\sup_{\eta>0}\eta^\lambda \#\mathcal{T}(f,\eta).$
\end{definition}
 Then, with this definition, we finally reach the result about  convergence rate of our wavelet-based algorithm. We first introduce a lemma from Temlyakov's work on $n-$term approximation, which reveals the relation between the norm of wavelet coefficients and the number of cubes.
\begin{lemma}[Lemma 2.1 and Lemma 2.2 \cite{Temlyakov:1998ul}]
   	Let $\Lambda\subset \mathcal{D}_+$ and $S=\sum_{I\in \Lambda}A_I(S)$ then we have the following
   	\begin{align}\label{le1}
   		C\min_{I\in \Lambda}\|A_I(S)\|(\#\Lambda)^{1/2}\leq \|S\|\leq C' \max_{I\in \Lambda}\|A_I(S)\|(\#\Lambda)^{1/2},
   	\end{align}
   	where  $C,C'$ are two independent constants.
\end{lemma}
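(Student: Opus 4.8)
The plan is to reduce the entire statement to the near-orthogonality of the wavelet blocks $A_I(S)$. Recall that for $I \in \mathcal{D}_j$ we have $A_I(S) = \sum_{e' \in E'} c_I^{e'}(S)\,\psi_I^{e'}$, a finite sum over the wavelet collection attached to the single cube $I$. Since the family $\{\psi_I^{e'}\}$, ranging over all dyadic cubes $I \in \mathcal{D}_+$ and all nonzero vertices $e' \in E'$, is an orthonormal wavelet basis of $L_2(S)$ (for the Haar system used throughout this paper this is immediate, and for the general compactly supported systems Temlyakov treats it is a Riesz basis), distinct cubes index disjoint, mutually orthogonal blocks of basis functions. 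Consequently the blocks $\{A_I(S)\}_{I \in \Lambda}$ are pairwise orthogonal, $\langle A_I(S), A_{I'}(S)\rangle = 0$ for $I \neq I'$, and $\|A_I(S)\|^2 = \sum_{e'} |c_I^{e'}(S)|^2$.

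Granting this, the first step I would carry out is the norm identity
$$\|S\|^2 = \Big\|\sum_{I \in \Lambda} A_I(S)\Big\|^2 = \sum_{I \in \Lambda}\|A_I(S)\|^2,$$
which is simply Pythagoras in the orthonormal case. For a general (biorthogonal) wavelet system one replaces this by the stability/norm equivalence furnished by the Riesz basis property: there are constants $0 < c_1 \le c_2 < \infty$, depending only on the wavelet system and the dimension $d$, with
$$c_1 \sum_{I \in \Lambda}\|A_I(S)\|^2 \le \|S\|^2 \le c_2 \sum_{I \in \Lambda}\|A_I(S)\|^2.$$
This is exactly where the two independent constants $C, C'$ in the statement originate; for the orthonormal Haar system one may take $c_1 = c_2 = 1$ and the middle term is an exact equality.

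The second step is the elementary sandwiching of the sum by its extreme terms. Writing $N = \#\Lambda$, every summand obeys $\min_{I \in \Lambda}\|A_I(S)\|^2 \le \|A_J(S)\|^2 \le \max_{I \in \Lambda}\|A_I(S)\|^2$, so summing over the $N$ cubes gives
$$N \min_{I \in \Lambda}\|A_I(S)\|^2 \le \sum_{I \in \Lambda}\|A_I(S)\|^2 \le N \max_{I \in \Lambda}\|A_I(S)\|^2.$$
Combining this with the norm equivalence and taking square roots yields
$$\sqrt{c_1}\,\min_{I \in \Lambda}\|A_I(S)\|\,N^{1/2} \le \|S\| \le \sqrt{c_2}\,\max_{I \in \Lambda}\|A_I(S)\|\,N^{1/2},$$
which is the asserted inequality with $C = \sqrt{c_1}$ and $C' = \sqrt{c_2}$.

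The only genuine obstacle is justifying the norm equivalence when the wavelet system is not orthonormal: one must verify that the chosen refinable scaling function generates a Riesz basis, i.e. that the coefficient-to-function map is bounded with bounded inverse on $\ell^2$. For the Haar system adopted here this step is vacuous, since the system is orthonormal and the two-sided bound collapses to Parseval's identity, leaving only the purely combinatorial sandwiching; the general-wavelet version is what forces the appearance of the constants and is precisely the content imported from Temlyakov's Lemma 2.1 and Lemma 2.2.
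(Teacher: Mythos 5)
Your proof is correct for the statement as it is used in this paper, where $\|\cdot\|$ is explicitly the $L_2$ norm. Note, however, that the paper itself supplies no proof of this lemma at all --- it is imported wholesale as Lemma 2.1 and Lemma 2.2 of Temlyakov's paper on $n$-term approximation --- so the relevant comparison is with the cited source rather than with anything in this text. There your argument genuinely diverges: Temlyakov proves these lemmas for $L_p$, $1<p<\infty$, via the Littlewood--Paley square-function characterization of $L_p$ wavelet expansions, and that is where the two nontrivial, $p$-dependent constants $C,C'$ actually come from. Your route --- pairwise orthogonality of the blocks $A_I(S)$ across distinct dyadic cubes (valid for Haar even when cubes at different scales are nested, since wavelets at different scales remain orthogonal), Parseval, and the elementary sandwich $N\min \le \sum \le N\max$ --- is the $L_2$ specialization, and it is not only correct but sharper in this setting: for the Haar system the paper adopts, it yields $C=C'=1$, exposing that the constants in the stated lemma are an artifact of the general $L_p$/Riesz-basis formulation rather than anything needed here. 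Your handling of the edge cases is also careful: restricting to nonzero vertices $e'\in E'$ is consistent with $\Lambda\subset\mathcal{D}_+=\cup_{j\ge 1}\mathcal{D}_j$, so the scaling-function block at level $0$ never enters. The one point you flag as an obstacle --- norm equivalence for non-orthonormal systems --- deserves one more sentence than you give it: for a Riesz basis the equivalence is at the level of scalar coefficients, so you additionally need $\|A_I(S)\|^2 \sim \sum_{e'}|c_I^{e'}(S)|^2$ blockwise, which follows from $\|\psi_I^{e'}\|\sim 1$ and the finiteness of $E'$ (a constant depending only on $d$); this is routine but is a genuine step, and it is precisely the content your appeal to Temlyakov would otherwise paper over. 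In short: correct, more elementary than the cited proof, and adequate for every use the paper makes of the lemma, since both theorems in Section 3.3 invoke it only in $L_2$.
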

The above lemma tells that for every $f\in L_2(S)$ of the form $f=\sum_{I\in \lambda(f,\eta)}A_I(f)$, $\#\mathcal{T}(f,\eta)<C(f)\eta^{-2}$. In other words,  for $\lambda\in(0,2)$, $\mathcal{B}_{\lambda}(L_2(S))$ is nonempty. Thus, we focus our analysis on case $\lambda\in (0,2)$ and obtain the following results\footnote{In the following theorems and proofs, by $C$ we denote all involved constants, though they actually stand for different quantities.}. 
	     \begin{theorem}
	     	For $f \in \mathcal{B}_\lambda(L_2(S))$, and $\lambda\in (0,2)$, we have 
	     	\begin{align}\label{th1}
	     		\|f-S(f, \eta)\|\leq C|f|_{\mathcal{B}_{\lambda}\left(L_{2}(S)\right)}^{\lambda / 2} \eta^{1-\lambda / 2}.
	     	\end{align}
	     \end{theorem}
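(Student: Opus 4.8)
The plan is to bound the error by the tail of the wavelet expansion supported on the cubes that the thresholding procedure discards, and then to control this tail by a dyadic stratification of the discarded coefficients according to their magnitude. Since $f=\sum_{I\in\mathcal{D}}A_I(f)$ and $S(f,\eta)=\sum_{I\in\mathcal{T}(f,\eta)}A_I(f)$, the error is exactly
\begin{align*}
f-S(f,\eta)=\sum_{I\in\mathcal{D}\setminus\mathcal{T}(f,\eta)}A_I(f).
\end{align*}
The first observation I would record is that every discarded cube has a small coefficient: because $\mathcal{T}(f,\eta)\supseteq\Lambda(f,\eta)=\{I\in\mathcal{D}_+:\|A_I(f)\|\geq\eta\}$, any $I\notin\mathcal{T}(f,\eta)$ must satisfy $\|A_I(f)\|<\eta$ (the root $X$ is always retained, so only fine-scale cubes enter the tail).

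Next I would slice the discarded set into dyadic magnitude bands. For $k=0,1,2,\dots$ set
\begin{align*}
\Lambda_k:=\{I\in\mathcal{D}_+:2^{-(k+1)}\eta\leq\|A_I(f)\|<2^{-k}\eta\},
\end{align*}
so that every discarded cube with $A_I(f)\neq 0$ lies in $\bigcup_{k\geq 0}\Lambda_k$. The size of each band is controlled by the regularity assumption: since $\Lambda_k\subseteq\Lambda(f,2^{-(k+1)}\eta)\subseteq\mathcal{T}(f,2^{-(k+1)}\eta)$, the definition of $\mathcal{B}_\lambda(L_2(S))$ gives $\#\Lambda_k\leq|f|_{\mathcal{B}_\lambda(L_2(S))}^\lambda(2^{-(k+1)}\eta)^{-\lambda}$. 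Applying the upper estimate of Lemma 1 to $S_k:=\sum_{I\in\Lambda_k}A_I(f)$ and using $\max_{I\in\Lambda_k}\|A_I(f)\|<2^{-k}\eta$, I obtain
\begin{align*}
\|S_k\|\leq C'\Big(\max_{I\in\Lambda_k}\|A_I(f)\|\Big)(\#\Lambda_k)^{1/2}\leq C\,|f|_{\mathcal{B}_\lambda(L_2(S))}^{\lambda/2}\,\eta^{1-\lambda/2}\,2^{k(\lambda/2-1)}.
\end{align*}

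Finally I would sum over the bands by the triangle inequality, $\|f-S(f,\eta)\|\leq\sum_{k\geq 0}\|S_k\|$, and collect the geometric factor $\sum_{k\geq 0}2^{k(\lambda/2-1)}$. This is precisely where the restriction $\lambda\in(0,2)$ becomes essential: the exponent $\lambda/2-1$ is strictly negative, so the series converges to a finite constant depending only on $\lambda$, and absorbing it into $C$ yields the claimed bound \eqref{th1}. I expect the main difficulty to be bookkeeping rather than conceptual, since the argument is a clean stratification: one must verify that the band count and the magnitude bound combine to produce exactly the factor $2^{k(\lambda/2-1)}$ (so that summability hinges precisely on $\lambda<2$), and one must track that the constant $C'$ from Lemma 1 together with the constant $C(f)$ implicit in the quasi-norm are absorbed consistently, given that a single symbol $C$ is used throughout for genuinely distinct quantities.
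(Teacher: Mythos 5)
Your proposal is correct and follows essentially the same route as the paper's own proof: a dyadic stratification of the small coefficients into layers of size $\#\Lambda_k\lesssim|f|_{\mathcal{B}_\lambda}^\lambda(2^{-k}\eta)^{-\lambda}$, the upper estimate of Lemma 1 on each layer, and a geometric series $\sum_k 2^{-k(1-\lambda/2)}$ whose convergence is exactly where $\lambda<2$ is used. The only (immaterial) difference is that the paper defines its layers as tree differences $\Delta(f,2^{-\ell}\eta)=\mathcal{T}(f,2^{-\ell-1}\eta)\setminus\mathcal{T}(f,2^{-\ell}\eta)$ and counts them by $\#\mathcal{T}(f,2^{-\ell-1}\eta)$, whereas you band the discarded cubes directly by coefficient magnitude, which yields the identical count and the identical bound.
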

\begin{proof}
	For $f\in \mathcal{B}_\lambda(L_2(S))$ and let $M:=|f|_{\mathcal{B}_\lambda(L_2(S))}$. We notice that for $I\notin \mathcal{T}(f, 2^{-\ell}\eta)$, by the definition of the subtree, we have $\|A_I(f)\|\leq {2^{-\ell}\eta}.$ Further, we define the $\ell-$level-sum of $f$ as 
	$$\Sigma_{\ell} =\sum_{I \in \Delta(f,2^{-\ell})} A_I(f),$$where $\Delta(f,2^{-\ell}\eta)= \mathcal{T}\left(f, 2^{-\ell-1} \eta\right) \backslash \mathcal{T}\left(f, 2^{-\ell} \eta\right).$ Since we have $\|f-S(f,\eta)\|\leq \sum_{l=0}^\infty \|\Sigma_\ell\|$, it is sufficient for us to estimate the level sum. It is noted that for each $\Sigma_\ell$, we have 
	\begin{align*}
		\|\Sigma_\ell\|&=\|\sum_{I\in \Delta(f,2^{-\ell}\eta)}A_I(f)\|\\
		&\leq C2^{-\ell} \eta\left[\#\mathcal{T}(f, 2^{-\ell-1} \eta\right) ]^{1 / 2} \\
		&\leq C 2^{-\ell} \eta\left[M^{\lambda} 2^{\ell \lambda} \eta^{-\lambda}\right]^{1 / 2},
	\end{align*} 
	where the first inequality is due to Lemma 1 and the definition of quasi-norm leads to the second inequality. Finally, we obtain that 
	\begin{align*}
		\|f-S(f, \eta)\| & \leq \sum_{\ell=0}^{\infty}\left\|\Sigma_{\ell}\right\|\\
		&\leq C M^{\lambda / 2} \eta^{1-\lambda / 2} \sum_{\ell=0}^{\infty} 2^{-\ell(1-\lambda / 2)} \\ & \leq \tilde{C} M^{\lambda / 2} \eta^{1-\lambda / 2} \\
		&\leq \tilde{C}|f|_{\mathcal{B}_{\lambda}\left(L_{2}(S)\right)}^{\lambda / 2} \eta^{1-\lambda / 2},
	\end{align*}
	where $\tilde{C}=C\sum_{\ell=0}^{\infty} 2^{-\ell(1-\lambda / 2)}<\infty$. This completes the proof.
\end{proof}
Though the above inequality has already showed that our GMSA approximates a given function $f$ arbitrarily well as $\eta$ tends to zero, the convergence result is still unclear, since $f$ is not characterized in some well-known smoothness spaces, such as Besov spaces. Therefore, to better present the convergence, we further consider investigating approximation error for $f$ in Besov spaces. Similar wavelet expansion convergence results in Sobolev spaces have already been studied by Kon and Raphael \cite{kon2001convergence} and here we focus on Besov spaces for the following reasons. First, Besov spaces are slightly larger than Sobolev spaces and lead to more general results. Second, the smoothness in Besov spaces can be characterized by the norm of wavelet coefficients; i.e., for $f\in B_q^s(L_2(S)), 0<q<\infty$, we have the following quasi-norm:
\begin{align}\label{besov}
	|f|_{B_q^s(L_2(S))}:= \left(\sum_{j=0}^\infty 2^{jsq}\left(\sum_{I\in \mathcal{D}_j}\|A_I(f)\|^2\right)^{q/2}\right)^{1/q}.
\end{align}
For more about convergence of wavelet expansion in different smoothness spaces, we refer readers to \cite{kon2001convergence,Han_2018uc, DeVore:1998uf} and with the above definition, we are ready to bridge the gap between $\mathcal{B}_{\lambda}(L_{2}(S))$ and $B_q^s(L_2(S)).$
\begin{theorem}
	For $0<q<\infty$ and a given $\lambda\in (0,2)$, let $s=(2-\lambda)d/2\lambda$, Besov spaces $B_q^s(L_2(S))$ are continuously embedded in $\mathcal{B}_{\lambda}(L_2(S))$, i.e.,
	\begin{align}\label{emb}
		|f|_{\mathcal{B}_{\lambda}(L_{2}(S))}\leq C|f|_{B_q^s(L_2(S))}.
	\end{align}
	where $|\cdot|_{B_q^s(L_2(S))}$ denotes the quasi-norm in Besoc spaces $B_{q}^s(L_2(S)).$ If we let $N=\#\mathcal{T}(f,\eta)$, then for $f\in B_q^s(L_2(S))$ a more straightforward 
	     	\begin{align}\label{on}
	     			\|f-S(f, \eta)\| \leq C|f|_{B_q^s\left(L_{2}(S)\right)}N^{-s/d}.
	     	\end{align}
	     \end{theorem}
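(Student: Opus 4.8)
The plan is to prove the two claims of the theorem in sequence, since the second inequality \eqref{on} follows almost immediately from combining the embedding \eqref{emb} with the rate already established in Theorem 1. I would therefore devote most of the effort to the embedding $B_q^s(L_2(S))\hookrightarrow \mathcal{B}_\lambda(L_2(S))$, and treat \eqref{on} as a short corollary at the end.

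First I would unwind the definitions. To bound $|f|_{\mathcal{B}_\lambda(L_2(S))}$, I need to control $\#\mathcal{T}(f,\eta)$ for every $\eta>0$. The natural route is to observe that, up to a constant factor coming from passing from $\Lambda(f,\eta)$ to the smallest containing tree $\mathcal{T}(f,\eta)$, it suffices to count the cubes $I$ with $\|A_I(f)\|\geq \eta$. So the heart of the matter is a counting estimate: if $N_\eta:=\#\{I\in\mathcal{D}_+(S):\|A_I(f)\|\geq \eta\}$, then each such cube contributes at least $\eta^2$ to the sum $\sum_I \|A_I(f)\|^2$, which gives the crude bound $N_\eta\eta^2\leq \sum_I\|A_I(f)\|^2$. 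This is exactly the $\lambda=2$ endpoint; to get the sharper exponent $\eta^{-\lambda}$ for $\lambda<2$ I must exploit the extra decay in scale encoded by the weight $2^{jsq}$ in the Besov quasi-norm \eqref{besov}, with the prescribed relation $s=(2-\lambda)d/2\lambda$.

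The hard part will be the counting step done scale by scale. My plan is to split the level set $\{I:\|A_I(f)\|\geq\eta\}$ into contributions from each dyadic scale $j$, estimate the number $n_j$ of surviving cubes at scale $j$ via $n_j\eta^2\leq \sum_{I\in\mathcal{D}_j}\|A_I(f)\|^2$, and then sum $N_\eta=\sum_j n_j$ against the Besov weights using Hölder's inequality in the index $q$ together with the geometric summability granted by $s>0$. The exponent bookkeeping is where the relation $s=(2-\lambda)d/2\lambda$ must fall out exactly, so I would reserve care there; the role of $d$ enters because a cube at scale $j$ has $N=\#\mathcal{T}(f,\eta)$ cubes scaling like $\eta^{-\lambda}$ and one converts a scale count into an $n$-term count via $\eta\sim N^{-1/\lambda}$. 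This interplay between the $q$-summation and the $2$-summation over $\mathcal{D}_j$ is the genuinely delicate point, and I expect it to be the main obstacle; everything else is definitional or a clean application of the triangle inequality.

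Once \eqref{emb} is in hand, \eqref{on} is immediate: by Theorem 1 we have $\|f-S(f,\eta)\|\leq C|f|_{\mathcal{B}_\lambda(L_2(S))}^{\lambda/2}\eta^{1-\lambda/2}$, and substituting the embedding bound $|f|_{\mathcal{B}_\lambda(L_2(S))}\leq C|f|_{B_q^s(L_2(S))}$ controls the quasi-norm factor by the Besov quasi-norm. It then remains to replace $\eta$ by $N=\#\mathcal{T}(f,\eta)$. Using the defining relation $\#\mathcal{T}(f,\eta)\leq C(f)\eta^{-\lambda}$, i.e. $\eta\lesssim N^{-1/\lambda}$, I would substitute into $\eta^{1-\lambda/2}$ and simplify the exponent: $(1-\lambda/2)/\lambda=(2-\lambda)/2\lambda=s/d$ by the chosen value of $s$. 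This yields $\eta^{1-\lambda/2}\lesssim N^{-s/d}$ and hence \eqref{on}, completing the proof.
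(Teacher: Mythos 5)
There is a genuine gap at exactly the step you deferred. Write $\tilde M:=|f|_{B_q^s(L_2(S))}$. Your counting scheme --- $n_j\eta^2\leq\sum_{I\in\mathcal{D}_j}\|A_I(f)\|^2$ per scale, then summing over $j$ --- cannot produce the exponent $\eta^{-\lambda}$ on its own. The Besov quasi-norm \eqref{besov} gives $\sum_{I\in\mathcal{D}_j}\|A_I(f)\|^2\leq \tilde M^2 2^{-2js}$ by simple term-wise domination (no H\"older in $q$ is needed, and in fact $q$ plays no role in the estimate), so $n_j\leq \tilde M^2 2^{-2js}\eta^{-2}$ and $\sum_j n_j\leq C\tilde M^2\eta^{-2}$: this is the $\lambda=2$ endpoint you started from, with the geometric series in $s>0$ merely removing a divergence, not improving the power of $\eta$. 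The missing idea --- which is how the paper obtains $\eta^{-\lambda}$ and the \emph{only} place $d$ enters --- is the trivial cardinality cap $\#\left(\mathcal{T}(f,\eta)\cap\mathcal{D}_j\right)\leq 2^{jd}$. One bounds the level count by $\min\left(2^{jd},\, C\tilde M^2 2^{-2js}\eta^{-2}\right)$, balances the two bounds at the crossover scale $j_0$ with $2^{j_0 d}\sim(\tilde M/\eta)^{2d/(d+2s)}=(\tilde M/\eta)^{\lambda}$ (the relation $s=(2-\lambda)d/2\lambda$ is precisely what makes $2d/(d+2s)=\lambda$), and sums $2^{jd}$ below $j_0$ and the decaying bound above $j_0$ to get $\#\mathcal{T}(f,\eta)\leq C\tilde M^{\lambda}\eta^{-\lambda}$. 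Your ``exponent bookkeeping'' remark does not identify this mechanism, and your sentence explaining the role of $d$ (converting a scale count to a term count via $\eta\sim N^{-1/\lambda}$) is circular: $N\sim\eta^{-\lambda}$ is the \emph{conclusion} of the count, not an input to it.

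A secondary but real flaw is your opening reduction: ``up to a constant factor coming from passing from $\Lambda(f,\eta)$ to the smallest containing tree'' is false in general, since a single cube of $\Lambda(f,\eta)$ at depth $j$ forces all $j$ of its ancestors into $\mathcal{T}(f,\eta)$, so $\#\mathcal{T}(f,\eta)$ is not $O(\#\Lambda(f,\eta))$. The paper instead counts level by level: cubes of $\mathcal{T}_j(f,\eta)\setminus\Lambda_j(f,\eta)$ are charged to parents and siblings in $\Lambda_{j-1}(f,\eta)$ and $\Lambda_j(f,\eta)$ via the first inequality of \eqref{le1}, with ancestor chains absorbed by the $2^{jd}$ cap at coarse scales, yielding $\#\mathcal{T}_j(f,\eta)\leq C\min\left(2^{jd},(1+2^{2s})\tilde M^2 2^{-2js}\eta^{-2}\right)$. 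Your final step --- deducing \eqref{on} from \eqref{emb} and \eqref{th1} via $\eta\lesssim N^{-1/\lambda}$ and the identity $(1-\lambda/2)/\lambda=s/d$ --- is correct and matches the paper, but it is contingent on the embedding you have not actually established.
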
 
\begin{proof}
    	For $f\in B_q^s(L_2(S))$, we denote $\tilde{M}$ the quasi-norm of $f$ and we also define $\Lambda_j(f,\eta):=\Lambda(f,\eta)\cap\mathcal{D}_j$ then for function $\sum_{I\in \Lambda_j(f,\eta)}A_I(f)$, the first inequality in \eqref{le1} tells that 
	$\eta (\#\Lambda_j)^{1/2}\leq C \|\sum_{I\in \Lambda_j}A_I(f)\|.$ On the other hand, by the definition of quasi-norm \eqref{besov}, we have for any $j$, {\small$$\tilde{M}^{q}=\sum_{j=0}^\infty 2^{jsq}(\sum_{I\in \mathcal{D}_j}\|A_I(f)\|^2)^{q/2}\geq 2^{jsq}(\sum_{I\in \mathcal{D}_j}\|A_I(f)\|^2)^{q/2}.$$} Combining two inequalities gives $\#\Lambda_j(f,\eta)\leq \tilde{M}^22^{-2js}\eta^{-2}.$ Furthermore, we define that $\mathcal{T}_j(f,\eta):=\mathcal{T}(f,\eta)\cap \mathcal{D}_j.$ Obviously, $\mathcal{T}_j(f,\eta)=\Lambda_j(f,\eta)\cup (\mathcal{T}_j(f,\eta)\backslash \Lambda_j(f,\eta))$ and for $I\in \mathcal{T}_j(f,\eta)\backslash \Lambda_j(f,\eta)$, its sibling must be in $\Lambda_j(f,\eta)$ and its parent must belong to $\Lambda_{j-1}(f,\eta)$. Hence, $\#\mathcal{T}_j(f,\eta)\leq \# \Lambda_j(f,\eta)+\#\Lambda_{j-1(f,\eta)}$, and we obtain
	\begin{align*}
		\# \mathcal{T}_{j}(f, \eta) &\leq C \min \left(2^{j d}, (1+2^{2s})\tilde{M}^2 2^{-2js}\eta^{-2}\right)
	\end{align*} 
	Notice that $2^{jd}$ is increasing while $(1+2^{2s})\tilde{M}^2 2^{-2js}\eta^{-2}$ is decreasing, thus there exists $j_0=\lceil \lambda/d\log_2(\sqrt{1+2^{2s}}\tilde{M}/\eta)\rceil$ such that $2^{j_0 d}\geq (1+2^{2s})\tilde{M}^2 2^{-2j_0s}\eta^{-2},$ which implies that 
	\begin{align*}
		\#\mathcal{T}(f,\eta)&=\sum_{j=0}^\infty\#\mathcal{T}_j(f,\eta)\\
		&\leq C \left(\sum_{j=0}^{j_0-1}2^{jd}+(\tilde{M}/\eta)^2\sum_{j=j_0}^\infty  2^{-2js}\right)\\
		&\leq C \tilde{M}^{\lambda}\eta^{-\lambda}.
	\end{align*}
	This inequality leads to the fact that $M\leq C\tilde{M}$, since 
	$\#\Lambda(f,\eta)\leq \#\mathcal{T}(f,\eta)\leq C\tilde{M}^\lambda \eta^{-\lambda},$ confirming \eqref{emb}. With the relation $N\sim \eta^{-\lambda}$, \eqref{on} follows from \eqref{th1} and \eqref{emb} naturally.
\end{proof}
\section{Numerical results} 
Due to the limitation of space, in this section, we describe the results of our GMSA in solving two classical control problems: Cartpole problem \cite{barto1983neuronlike} and Arcobot problem \cite{sutton1996generalization}. First, we consider Cartpole problem, where a pole is attached to a cart by an un-actuated joint. In this system, the pole starts upright and the goal is to prevent it from falling which is achieved by applying a force of $+1$(to the right) or $-1$ (to the left). The state variable is a four-tuple consisting of position and velocity of the cart as well as angle and rotation rate of the pole. Meanwhile, a reward is given for each time-step when the pole remains upright and the episode comes to an end if the pole falls or the cart moves more than 2.4 units from the center. 

To evaluate our GMSA, we test it against adaptive tile coding as well as fixed tile coding and the following parameters settings are used in our experiment: $\alpha= 0.001, \gamma=0.8, p=50, \eta=10^{-3}$. For our algorithm, Haar basis is utilized whereas 2 initial tilings for each dimension are set for ATC. Next, we test different fixed tiling representations, and select the best performing setting for comparison: for each dimension, the number of tilings is 10 and the associated width of tiles is 0.025. According to the results shown in Fig \ref{num_exp}, we can see that our GMSA achieves the best performance much better than that of fixed tile coding: our method reaches a score of 200 in fewer episodes whereas fixed tile coding needs more than $6\times 10^{6}(120\times 50000)$ episodes. Also, we notice that in this experiment, ATC is still very competitive, since our method does not outperform it significantly, even though the Haar basis outperforms the piece-wise constant basis in approximation. 

In order to illustrate further the performance of our method, we consider a more sophisticated Arcobot problem, where the value function is known to be ill-behaved \cite{munos2002variable}. The Acrobot is also 4-dimensional control problem which consists of two joints and two links, where the joint between the two links is actuated and states are depicted by angles and rotation rates of the two links. The goal is to  swing the endpoint up to a given height and there are three actions: positive torque, negative torque and no torque with a reward of -1 for each time-step. The numerical result is shown in Fig \ref{num_exp}(b), where the rolling mean is presented(rolling window is 1000 episodes). It demonstrates that our multiscale approximation learns a better representation than ATC since the mean increases faster and the rolling-out mean is always greater than that of ATC.
\begin{figure}
\label{num_exp}
	\centering
	\subfigure[Reward per episode in Cartpole problem: GMSA is compared to ATC and best-performing fixed tile coding.]{\includegraphics[width=0.46\textwidth]{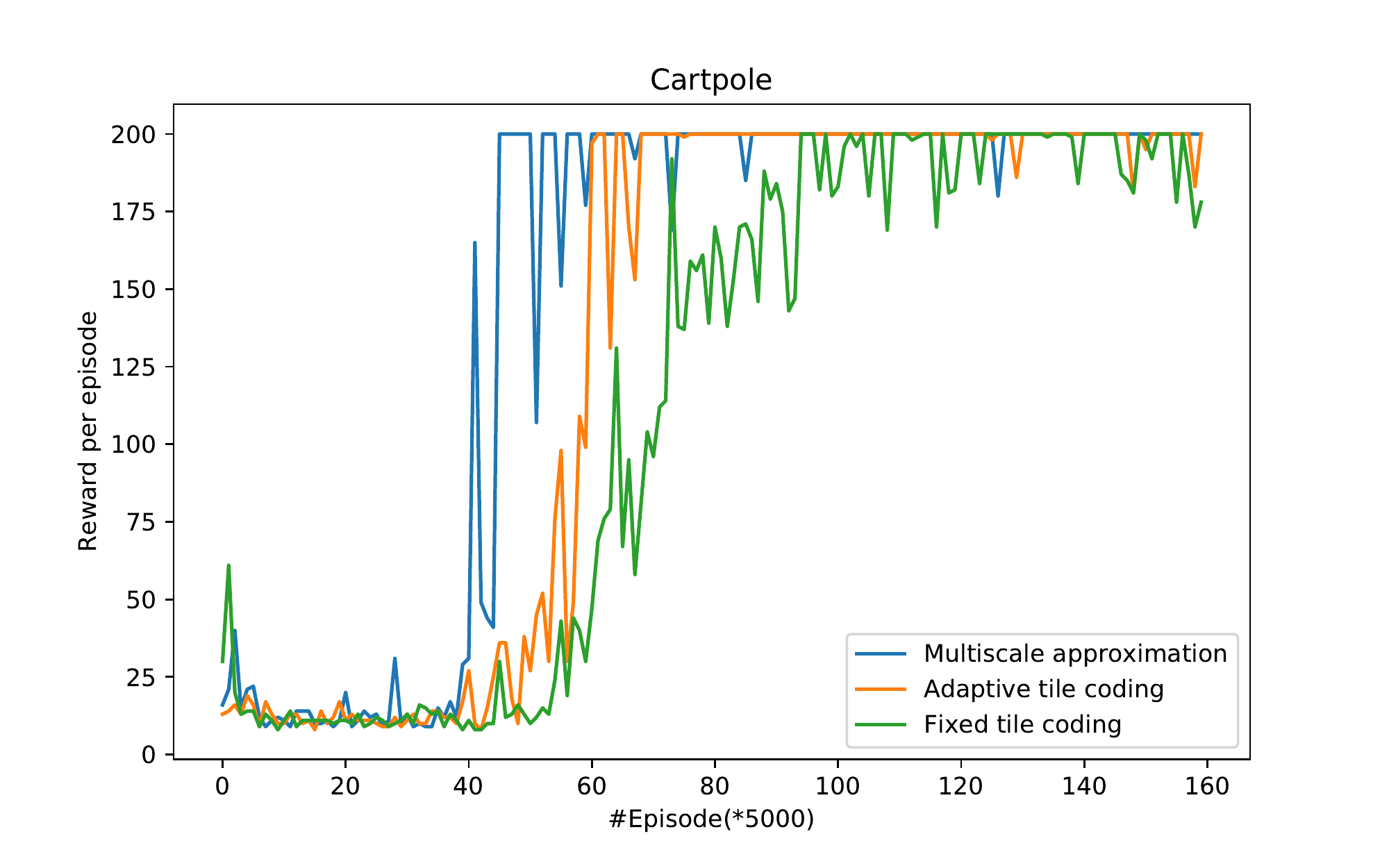}}
	\subfigure[Rolling-mean per 1000 episodes: GMSA is compared to ATC]{\includegraphics[width=0.46\textwidth]{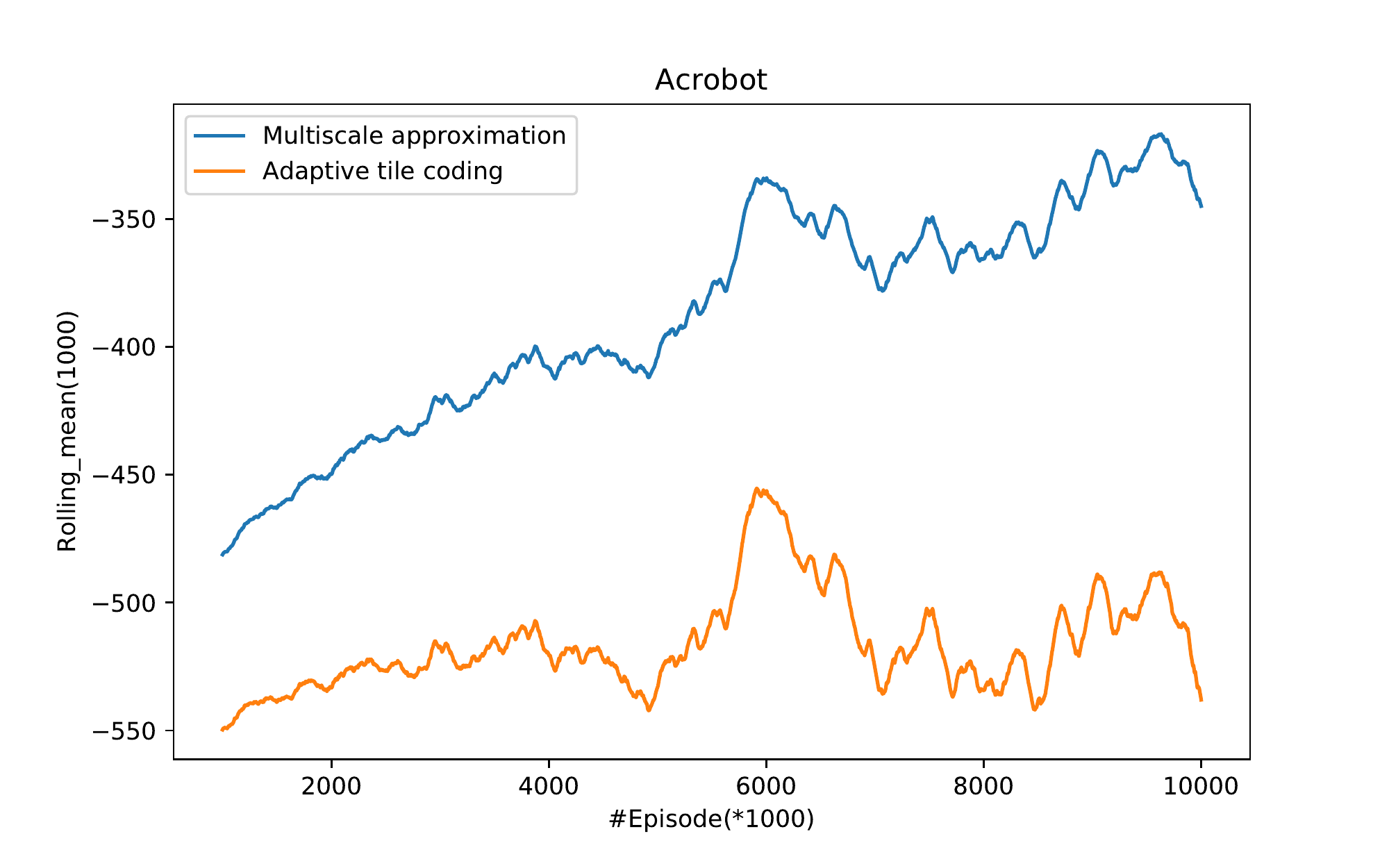}}
	\caption{Numerical results of GMSA: in comparison with ATC and fixed tile coding}
\end{figure}

\section{Discussion}
In this section, we discuss the connection between ATC and our GMSA for explaining the similarities in numerical implementation. Also, we aim to clarify that the regularity of basis functions does not affect the convergence rate. 

As we have mentioned before, our GMSA essentially constructs an adaptive representation for reinforcement learning, sharing the same idea with ATC, where piece-wise constant functions are employed for approximating the value function. Even though the tree approximations are not specified in ATC, the value criterion considered in ATC also leads to an adaptive partition which results in a tree structure. Therefore, the two methods are both based on tree approximation and the difference lies in the choice of basis. Since both follow the tree approximation, a similar argument leads to the fact that the convergence rate of using piece-wise constant function is also $O(N^{-s/d})$ and the basis functions in ATC is exactly an orthonormal system, the simplest example of multiresolution analysis, therefore, ATC is just a speical case of our GMSA. However, in numerical implementation, GMSA outperforms ATC, where only approximation operator $P_j$ is utilized but detail operator $Q_j$ never enters into the picture, This makes the previous information no longer heritable and the approximation in $V_{j}$ contribute little to the approximation in $V_{j+1}.$  Another interesting fact is that imposing regularity on the basis functions is of no use in increasing the convergence rate, since for these wavelet basis no smoothness is assumed in our proofs.   
\section{Conclusion}
In this paper, we have found that multiresolution analysis and tree approximation provide us with a generic framework for constructing adaptive approximation scheme, where the basis is created by refinable functions and associated wavelet functions. The multiresolution analysis has enabled us to leverage previous approximation and avoiding unnecessary computation. Tree approximation, on the other hand, has guaranteed that our basis selection follows an efficient way and is the key of GMSA because it can adjust approximation to the behaviors of the value function: for the regions where the function is smooth and flat, less resolution will be assigned and only the first few scales wavelet functions will be applied, whereas, for the regions where drastic changes in function value occur, more and more details shall be added. It has been noted that tree approximation does not depend on the basis, instead, it focuses on the value function itself and simply imposing regularity on basis function is infertile for achieving a higher convergence rate.
\section*{Acknowledgements}
The authors would like to thank the reviewers for their valuable feedback, especially the one who provided them with useful references and corrections. Further discussion on implementation, such as comparison with other kernel-based techniques, as suggested by reviewers, will be included in our future work. This research is supported in part by National Science Foundation (NSF) under grant ECCS-1847056, CNS-1544782, and SES-1541164, and in part by ARO grant W911NF1910041.
\bibliographystyle{ieeebib} 
\bibliography{references}

\end{document}